\newtheorem{theorem}{Theorem}
\definecolor{tabblue}{HTML}{0e79b1}
\newcommand{\giacobbeno}{\cite{giacobbe2019many}}
\newcommand{\giacobbe}{\cite{giacobbe2019many} }
\newcommand{\zvonimir}{\cite{baranowski2020fixed} }
\newcommand{\factoringpaper}{\cite{cheng2018verification} }
\newcommand{\reals}{\mathbb{R}}
\newcommand{\inputdomain}{\mathcal{D}}
\newcommand{\sem}[1]{\llbracket #1 \rrbracket}
\newcommand{\semreal}[1]{\sem{#1}_\reals}
\newcommand{\semfloat}[1]{\sem{#1}_{\text{float32}}}
\newcommand{\semink}[2]{\sem{#1}_{\text{int-}#2}}
\begin{document}
	
	\title{Scalable Verification of Quantized Neural Networks (Technical Report)}
	\author{Thomas A. Henzinger, Mathias Lechner, \DJ{}or\dj{}e \v{Z}ikeli\'c}
	\affiliations{
		
		IST Austria\\
		Klosterneuburg, Austria\\
		\{tah,mlechner, dzikelic\}@ist.ac.at
		
	}

\maketitle

\begin{abstract}
Formal verification of neural networks is an active topic of research, and recent advances have significantly increased the size of the networks that verification tools can handle. However, most methods are designed for verification of an idealized model of the actual network which works over real arithmetic and ignores rounding imprecisions. This idealization is in stark contrast to network quantization, which is a technique that trades numerical precision for computational efficiency and is, therefore, often applied in practice. Neglecting rounding errors of such low-bit quantized neural networks has been shown to lead to wrong conclusions about the network's correctness. Thus, the desired approach for verifying quantized neural networks would be one that takes these rounding errors into account. 
In this paper, we show that verifying the bit-exact implementation of quantized neural networks with bit-vector specifications is PSPACE-hard, even though verifying idealized real-valued networks and satisfiability of bit-vector specifications alone are each in NP. Furthermore, we explore several practical heuristics toward closing the complexity gap between idealized and bit-exact verification. In particular, we propose three techniques for making SMT-based verification of quantized neural networks more scalable. Our experiments demonstrate that our proposed methods allow a speedup of up to three orders of magnitude over existing approaches.
\end{abstract}

\section{Introduction}
Deep neural networks for image classification typically consist of a large number of sequentially composed layers. Computing the output of such a network for a single input sample may require more than a billion floating-point operations \cite{tan2019efficientnet}.
Consequently, deploying a trained deep neural network imposes demanding requirements on the computational resources available at the computing device that runs the network.
Quantization of neural networks is a technique that reduces the computational cost of running a neural network by reducing the arithmetic precision of computations inside the network \cite{jacob2018quantization}.
As a result, quantization has been widely adapted in industry for deploying neural networks in a resource-friendly way. For instance, Tesla's Autopilot Hardware 3.0 is designed for running 8-bit quantized neural networks \cite{fsdchip}).

The verification problem for neural networks consists of checking validity of some input-output relation. More precisely, given two conditions over inputs and outputs of the network, the goal is to check if for every input sample which satisfies the input condition, the corresponding output of the neural network satisfies the output condition. Verification of neural networks has many important practical applications such as checking robustness to adversarial attacks~\cite{szegedy2013intriguing,tjeng2019evaluating}, proving safety in safety-critical applications~\cite{huang2017safety, lechner2021infinite, lechner2021stability} or output range analysis~\cite{dutta2019reachability}, to name a few. There are many efficient methods for verification of neural networks (e.g.~\cite{katz2017reluplex,tjeng2019evaluating,bunel2018unified}), however most of them ignore rounding errors in computations. The few approaches that can handle the semantics of rounding operations are overapproximation-based methods, i.e., incomplete verification \cite{singh2018fast,singh2019abstract}.
The imprecision introduced by quantization stands in stark contrast with the idealization made by verification methods for standard neural networks, which disregards rounding errors that appear due to the network's semantics.
Consequently, verification methods developed for standard networks are not sound for and cannot be applied to quantized neural networks. Indeed, recently it has been shown that specifications that hold for a floating-point representation of a network need not necessarily hold after quantizing the network~\cite{giacobbe2019many}.
As a result, specialized verification methods that take quantization into account need to be developed, due to more complex semantics of quantized neural networks.
Groundwork on such methods demonstrated that special encodings of networks in terms of Satisfiability Modulo Theories (SMT) \cite{smt2018handbook} with bit-vector \cite{giacobbe2019many} or fixed-point \cite{baranowski2020fixed} theories present a promising approach towards the verification of quantized networks.
However, the size of networks that these tools can handle and runtimes of these approaches do not match the efficiency of advanced verification methods developed for standard networks like Reluplex\cite{katz2017reluplex} and Neurify \cite{wang2018efficient}.

In this paper, we provide first evidence that the verification problem for quantized neural networks is harder compared to verification of their idealized counterparts, thus explaining the scalability-gap between existing methods for standard and quantized network verification. In particular, we show that verifying quantized neural networks with bit-vector specifications is \PSPACE-hard, despite the satisfiability problem of formulas in the given specification logic being in NP. As verification of neural networks without quantization is known to be \NP-complete~\cite{katz2017reluplex}, this implies that the verification of quantized neural networks is a harder problem.

We then address the scalability limitation of SMT-based methods for verification of quantized neural networks, and propose three techniques for their more efficient SMT encoding.
First, we introduce a technique for identifying those variables and constraints whose value can be determined in advance, thus decreasing the size of SMT-encodings of networks. Second, we show how to encode variables as bit-vectors of minimal necessary bit-width. This significantly reduces the size of bit-vector encoding of networks in~\cite{giacobbe2019many}. Third, we propose a redundancy elimination heuristic which exploits bit-level redundancies occurring in the semantics of the network.

Finally, we propose a new method for the analysis of the quantized network's reachable value range, which is based on abstract interpretation and assists our new techniques for SMT-encoding of quantized networks. We evaluate our approach on two well-studied adversarial robustness verification benchmarks. Our evaluation demonstrates that the combined effect of our techniques is a speed-up of over three orders of magnitude compared to the existing tools.

The rest of this work is organized as follows: First, we provide background and discuss related works on the verification of neural networks and quantized neural networks.
We then start with our contribution by showing that the verification problem for quantized neural networks with bit-vector specifications is PSPACE-hard.
In the following section, we propose several improvements to the existing SMT-encodings of quantized neural networks.
Finally, we present our experimental evaluation to assess the performance impacts of our techniques.

\section{Background and Related work}\label{sec:background}
A neural network is a function $f: \reals^n \rightarrow \reals^m$ that consists of several layers $f = l_1 \circ l_2\circ \cdots\circ l_k$ that are sequentially composed, with each layer parameterized by learned weight values.
Commonly found types of layers are linear 
\begin{equation}\label{eq:linear_layer}
    l(x) = Wx + b, W\in \reals^{n_o \times n_i}, b\in \reals^{n_o},
\end{equation}
ReLU $l(x) = \max\{x,0\}$, and convolutional layers \cite{lecun1998gradient}. 

In practice, the function $f$ is implemented by floating-point arithmetic instead of real-valued computations.
To distinguish a neural network from its approximation, we define an interpretation $\sem{f}$ as a map which assigns a new function to each network, i.e.
\begin{equation}
    \sem{ }: (\reals^n \rightarrow \reals^m) \rightarrow (\inputdomain \rightarrow \reals^m),
\end{equation}
where $\inputdomain \subset \reals^n$ is the admissible input domain.
For instance, we denote by $\sem{f}_\reals: f \mapsto f$
the idealized real-valued abstraction of a network $f$, whereas $\semfloat{f}$ denotes its floating-point implementation, i.e.~the realization of $f$ using 32-bit IEEE floating-point \cite{kahan1996ieee} instead of real arithmetic.
Evaluating $f$, even under floating-point interpretation, can be costly in terms of computations and memory resources.
In order to reduce these resource requirements, networks are usually quantized before being deployed to end devices \cite{jacob2018quantization}. 

Formally, quantization is an interpretation $\semink{f}{k}$ that evaluates a network $f$ which uses $k$-bit fixed-point arithmetic \cite{smith1997scientist}, e.g. 4 to 8 bits. Let $[\mathbb{Z}]_k=\{0,1\}^k$ denote the set of all bit-vectors of bit-width $k$. For each layer $l:[\mathbb{Z}]_k^{n_i}\rightarrow [\mathbb{Z}]_k^{n_0}$ in $\semink{f}{k}$, we define its semantics by defining $l(x_1,\dots,x_{n_i})=(y_1,\dots,y_{n_0})$ as follows:
\begin{align}
    x'_i &= \sum_{j=1}^{n_i} w_{ij} x_j + b_i,\label{eq:sum}\\
    x''_i &= \text{round}(x'_i,k_i) = \lfloor x'_i\cdot 2^{-k_i} \rfloor,\qquad \text{and}\label{eq:round} \\
    y_i &= \max\{0,\min\{2^{N_i}-1, x''_i \}\}\label{eq:relun},
\end{align}
Here, $w_{i,j}$ and $b_i$ for each $1\leq j\leq n_i$ and $1\leq i\leq n_0$ denote the learned weights and biases of $f$, and $k_i$ and $N_i$ denote the bit-shift and the cut-off value associated to each variable $y_i$, respectively. Eq.~\eqref{eq:sum} multiplies the inputs $x_j$ with the weight values $w_{ij}$ and adds the bias $b_i$, eq.~\eqref{eq:round} rounds the result to the nearest valid $k$-bit fixed-point value, and eq.~\eqref{eq:relun} is a non-linear ReLU-N activation function \footnote{Note that for quanitzed neural networks, the double-side bounded ReLU-N activation is preferred over the standard ReLU activation function \cite{jacob2018quantization}}.



An illustration of how the computations inside a network differ based on the used interpretation is shown in Fig. \ref{fig:precision}.

\begin{figure}[t]
    \centering
     \begin{tikzpicture}[
neuron/.style={circle,draw,fill=tabblue,inner sep=0.2cm},
neuronp/.style={circle,draw,fill=tabblue!50!black!20,inner sep=0.15cm}]
\node at (-3.2,0.6) [anchor=west] {\textbf{A) } Idealized real-valued network $\semreal{f}$};
\node (fake1) at (-1,-0.25) [anchor=east] {$0.94374\dots$};
\node (fake2) at (-1,-1.25) [anchor=east] {$1.382723\dots$};
\node (out) at (3,-0.75) [anchor=west] {$2.57799431\dots$};
\node (i1) at (0,-0.25) [neuron] {};
\node (i2) at (0,-1.25) [neuron] {};
\node (s1) at (2,-0.75) [neuronp] {+};
\draw[-Latex] (fake1) to (i1);
\draw[-Latex] (fake2) to (i2);
\draw[-Latex] (s1) to (out);
\draw[-Latex] (i1) to node[above] {1.75} (s1);
\draw[-Latex] (i2) to node[below] {0.67} (s1);
\begin{scope}[yshift=-3.0cm]
\node at (-3.2,0.6) [anchor=west] {\textbf{B)} Floating-point network $\semfloat{f}$};
\node (fake1) at (-1,-0.25) [anchor=east] {$0.94374$};
\node (fake2) at (-1,-1.25) [anchor=east] {$1.3827$};
\node (out) at (3,-0.75) [anchor=west,align=left] {$\semfloat{2.577954}$\\[0.2cm]\ $=2.5780$};
\node (i1) at (0,-0.25) [neuron] {};
\node (i2) at (0,-1.25) [neuron] {};
\node (s1) at (2,-0.75) [neuronp] {+};
\draw[-Latex] (fake1) to (i1);
\draw[-Latex] (fake2) to (i2);
\draw[-Latex] (s1) to (out);
\draw[-Latex] (i1) to node[above] {1.75} (s1);
\draw[-Latex] (i2) to node[below] {0.67} (s1);
\end{scope}
\begin{scope}[yshift=-6cm]
\node at (-3.2,0.6) [anchor=west] {\textbf{C)} Quantized (fixed-point) network $\semink{f}{8}$};
\node (fake1) at (-1,-0.25) [anchor=east] {$0.94$};
\node (fake2) at (-1,-1.25) [anchor=east] {$1.38$};
\node (out) at (3,-0.75) [anchor=west,align=left] {$\semink{2.5696}{8}$\\[0.2cm]\ $=2.57$};
\node (i1) at (0,-0.25) [neuron] {};
\node (i2) at (0,-1.25) [neuron] {};
\node (s1) at (2,-0.75) [neuronp] {+};
\draw[-Latex] (fake1) to (i1);
\draw[-Latex] (fake2) to (i2);
\draw[-Latex] (s1) to (out);
\draw[-Latex] (i1) to node[above] {1.75} (s1);
\draw[-Latex] (i2) to node[below] {0.67} (s1);
\end{scope}
\end{tikzpicture}

    \caption{Illustration of how different interpretations of the same network run with different numerical precision. \textbf{A)} $\semreal{f}$ assumes infinite precision. \textbf{B)} $\semfloat{f}$ rounds the mantissa according on the IEEE 754 standard. \textbf{C)} $\semink{f}{8}$ rounds to a fixed number of digits before and after the comma. (Note that this figure serves as a hypothetical example in decimal format, the actual computations run with the base-2 representation.)}
    \label{fig:precision}
    \vspace{-1.5em}
\end{figure}

\subsection{Verification of neural networks}
The verification problem for a neural network and its given interpretation consists of verifying some input-output relation. More formally, given a neural network $f$, its interpretation $\sem{f}$ and two predicates $\varphi$ and $\psi$ over the input domain $\inputdomain$ and output domain $\reals^m$ of $\sem{f}$, we want to check validity of the following formula (i.e.~whether it holds for each $x\in\inputdomain$)
\begin{equation}\label{eq:verifiproblem}
    \varphi(x)\land \sem{f}(x)=y \Longrightarrow \psi(y).
\end{equation}
We refer to the formula in eq.~\eqref{eq:verifiproblem} as the formal specification that needs to be proved. In order to formally verify a neural network, it is insufficient to just specify the network without also providing a particular interpretation. A property that holds with respect to one interpretation need not necessarily remain true if we consider a different interpretation. For example, robustness of the real-valued abstraction does not imply robustness of the floating-point implementation of a network \cite{giacobbe2019many,jia2020exploiting}.

Ideally, we would like to verify neural networks under the exact semantics that are used for running networks on the end device, i.e., $\semfloat{f}$ most of the time.
However, as verification methods for IEEE floating-point arithmetic are extremely inefficient, research has focused on verifying the idealized real-valued abstraction $\semreal{f}$ of $f$.
In particular, efficient methods have been developed for a popular type or networks that only consist of linear and ReLU operations (Figure \ref{fig:relun} a) \cite{katz2017reluplex,ehlers2017formal,tjeng2019evaluating,bunel2018unified}.
The piecewise linearity of such ReLU networks allows the use of Linear Programming (LP) techniques, which make the verification methods more efficient. 
The underlying verification problem of ReLU networks with linear inequality specifications was shown to be NP-complete in the number of ReLU operations \cite{katz2017reluplex}, however advanced tools scale beyond toy networks.

Although these methods can handle networks of large size, they are building on the assumption that 
\begin{equation}
    \semfloat{f} \approx \semreal{f},
\end{equation}
i.e.~that the rounding errors introduced by the IEEE floating-point arithmetic of both the network and the verification algorithm can be neglected. 
It has been recently shown that this need not always be true.
For example, Jia and Rinard \cite{jia2020exploiting} crafted adversarial counterexamples to the floating-point implementation of a neural network whose idealized interpretation was verified to be robust against such attacks, by exploiting subtle numerical differences between $\semfloat{f}$ and $\semreal{f}$.

\subsection{Verification of quantized neural networks}
The low numerical precision of few-bit fixed-point arithmetic implies that $\semink{f}{k} \neq \semreal{f}$. Indeed, \giacobbe constructed a prototypical network that either satisfies or violates a formal specification, depending on the numerical precision used to evaluate the network. Moreover, they observed such discrepancy in networks found in practice. Thus, no formal guarantee on $\semink{f}{k}$ can be obtained by verifying $\semreal{f}$ or $\semfloat{f}$.
In order to verify fixed-point implementations of (i.e.~quantized) neural networks, new approaches are required.



\begin{figure}[t]
    \centering
    	\begin{tikzpicture}[]
    	\node at (-0.8,2.0) {\large\textbf{a)}};
	\draw[->] (-0.4,0) to (2.2,0);
	\draw[->] (0,-0.2) to (0,1.8);
	\draw[line width=0.08cm,color=tabblue] (-0.6,0) -- (0,0) -- (1.8,1.8);
	\node at (2.0,0.3) {$x$};.5,0) -- (0,0) -- (1.5,1.5);
	\node at (-0.3,1.9) {$y$};
	\node at (0.5,-0.8) {$y = \big\llbracket \text{ReLU}(x) \rrbracket_{\reals}$};
	\begin{scope}[xshift={4cm}]
	    	\node at (-0.8,2.0) {\large\textbf{b)}};
	\draw[->] (-0.4,0) to (2.2,0);
	\draw[->] (0,-0.2) to (0,1.8);
	\draw[line width=0.08cm,color=tabblue] (-0.6,0) -- (0,0) -- ++(0,0.2) -- ++(0.2,0) -- ++(0,0.2) -- ++(0.2,0)-- ++(0,0.2) -- ++(0.2,0)-- ++(0,0.2) -- ++(0.2,0)-- ++(0,0.2) -- ++(0.2,0)-- ++(0,0.2) -- ++(0.2,0) -- ++(0,0.2) -- ++(0.2,0) -- ++(0.6,0);
	\node at (2.0,0.3) {$x$};.5,0) -- (0,0) -- (1.5,1.5);
	\node at (-0.3,1.9) {$y$};
	\node at (0.5,-0.8) {$y = \semink{\text{ReLU-N}(x)}{k}$};
	\end{scope}
	\end{tikzpicture}
    \caption{Illustration of \textbf{a)} the ReLU activation function under real-valued semantics, and \textbf{b)} ReLU-N activation under fixed-point semantics (right).}
    \label{fig:relun}
    \vspace{-1.5em}
\end{figure}

Fig.~\ref{fig:relun} depicts the ReLU activation function for idealized real-valued ReLU networks and for quantized ReLU networks, respectively. The activation function under fixed-point semantics consists of an exponential number of piecewise constant intervals thus making the LP-based techniques, which otherwise work well for real-valued networks, extremely inefficient. So the approaches developed for idealized real-valued ReLU networks cannot be efficiently applied to quantized networks.
Existing verification methods for quantized neural networks are based on bit-exact Boolean Satisfiability (SAT) and SMT encodings.
For 1-bit networks, i.e., binarized neural networks, Narodytska et al. \cite{narodytska2018verifying} and \factoringpaper proposed to encode the network semantics and the formal specification into an SAT formula, which is then checked by an off-the-shelf SAT solver.
While their approach could handle networks of decent size, the use of SAT-solving is limited to binarized networks, which are not very common in practice.

\giacobbe proposed to verify many-bit quantized neural network by encoding their semantics and specifications into quantifier-free bit-vector SMT (QF\_BV) formulas. 
The authors showed that, by reordering linear summations inside the network, such monolithic bit-vector SMT encodings could scale to the verification of small but interestingly sized networks.

\zvonimir introduced an SMT theory for fixed-point arithmetic and showed that the semantics of quantized neural networks could be encoded in this theory very naturally.
However, as the authors only proposed prototype solvers for reference purposes, the size of the verified networks was limited.

\subsection{Limitations of neural network verification}
The existing techniques for verification of idealized real-valued abstractions of neural networks have significantly increased the size of networks that can be verified~\cite{ehlers2017formal,katz2017reluplex,bunel2018unified,tjeng2019evaluating}. However, scalability remains the key challenge hindering formal verification of neural networks in practice.
For instance, even the largest networks verified by the existing methods~\cite{ruan2018reachability} are tiny compared to the network architectures used for object detection and image classification \cite{he2016deep}.

Regarding the verification of quantized neural networks, no advanced techniques aiming at performance improvements have been studied so far.
In this paper, we address the scalability of quantized neural network verification methods that rely on SMT-solving.

\section{Hardness of Verification of Quantized Neural Networks}\label{sec:hardness}

The size of quantized neural networks that existing verification methods can handle is significantly smaller compared to the real arithmetic networks that can be verified by the state-of-the-art tools like~\cite{katz2017reluplex,tjeng2019evaluating,bunel2018unified}. Thus, a natural question is whether this gap in scalability is only because existing methods for quantized neural networks are less efficient, or if the verification problem for quantized neural networks is computationally harder.

In this section, we study the computational complexity of the verification problem for quantized neural networks. For idealized real arithmetic interpretation of neural networks, it was shown in~\cite{katz2017reluplex} that, if predicates on inputs and outputs are given as conjunctions of linear inequalities, then the problem is \NP-complete. The fact that the problem is \NP-hard is established by reduction from $3$-\SAT, and the same argument can be used to show that the verification problem for quantized neural networks is also \NP-hard. In this work, we argue that the verification problem for quantized neural networks with bit-vector specifications is in fact \PSPACE-hard, and thus harder then verifying real arithmetic neural networks. Moreover, we show that this holds even for the special case when there are no constraints on the inputs of the network, i.e.~when the predicate on inputs is assumed to be a tautology. The verification problem for a quantized neural network $f$ that we consider consists of checking validity of a given input-output relation formula
\begin{equation*}
    \semink{f}{k}(x)=y \Longrightarrow \psi(y).
\end{equation*}
Here, $\semink{f}{k}$ is the $k$-bit fixed point arithmetic interpretation of $f$, and $\psi$ is a predicate in some specification logic over the outputs of $\semink{f}{k}$. Equivalently, we may also check satisfiability of the dual formula
\begin{equation}\label{eq:satisfiability}
    \semink{f}{k}(x)=y \land \neg\psi(y).
\end{equation}


In order to study complexity of the verification problem, we also need to specify the specification logic to which formula $\psi$ belongs. In this work, we study hardness with respect to the fragment $\textsc{QF\_BV2}_{bw}$ of the fixed-size bit-vector logic $\textsc{QF\_BV2}$~\cite{kovasznai2016complexity}. The fragment $\textsc{QF\_BV2}_{bw}$ allows bit-wise logical operations (such as bit-wise conjunction, disjunction and negation) and the equality operator. The index $2$ in $\textsc{QF\_BV2}_{bw}$ is used to denote that the constants and bit-widths are given in binary representation. It was shown in~\cite{kovasznai2016complexity} that the satisfiability problem for formulas in $\textsc{QF\_BV2}_{bw}$ is \NP-complete.

Even though $\textsc{QF\_BV2}_{bw}$ itself allows only bit-vector operations and not linear integer arithmetic, we show that by introducing dummy output variables in $\semink{f}{k}$ we may still encode formal specifications on outputs that are boolean combinations of linear inequalities over network's outputs. Thus, this specification logic is sufficiently expressive to encode formal specifications most often seen in practice. Let $y_1,\dots,y_m$ denote output variables of $\semink{f}{k}$. In order to encode an inequality of the form $a_1y_1+\dots+a_my_m+b\geq 0$ into the output specification, we do the following:
\begin{itemize}
    \item Introduce an additional output neuron $\tilde{y}$ and a directed edge from each output neuron $y_i$ to $\tilde{y}$. Let $a_i$ be the weight of an edge from $y_i$ to $\tilde{y}$, $b$ be the bias term of $\tilde{y}$, $k-1$ be the bit-shift value of $\tilde{y}$, and $N=k$ be the number of bits defining the cut-off value of $\tilde{y}$. Then
    \[
    \tilde{y}=\text{ReLU-N}(\text{round}(2^{-(k-1)}(a_1y_1+\dots+a_my_m+b))).
    \]
    Thus, as we work with bit-vectors of bit-width $k$, $\tilde{y}$ is just the sign bit of $a_1y_1+\dots+a_sy_s+b$ preceded by zeros.
    \item As $a_1y_1+\dots+a_sy_s+b\geq 0$ holds if and only if the sign bit of $a_1y_1+\dots+a_sy_s+b$ is $0$, in order to encode the inequality into the output specification it suffices to encode that $\tilde{y}=\mathbf{0}$, which is a formula expressible in $\textsc{QF\_BV2}_{bw}$.
\end{itemize}
By doing this for each linear inequality in the specification and since the logical operations are allowed by $\textsc{QF\_BV2}_{bw}$, it follows that we may use $\textsc{QF\_BV2}_{bw}$ to encode boolean combinations of linear inequalities over outputs as formal specifications that are to be verified.

Our main result in this section is that, if $\psi$ in eq.~\eqref{eq:satisfiability} is assumed to be a formula in $\textsc{QF\_BV2}_{bw}$, then the verification problem for quantized neural networks is \PSPACE-hard. Since checking satisfiability of $\psi$ can be done in non-deterministic polynomial time, this means that the additional hardness really comes from the quantized neural networks.

\begin{theorem}[Complexity of verification of QNNs]\label{thm:complexity}
If the predicate on outputs is assumed to be a formula in $\textsc{QF\_BV2}_{bw}$, the verification problem for quantized neural networks is \PSPACE-hard.
\end{theorem}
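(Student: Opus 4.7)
The plan is to reduce from True Quantified Boolean Formula (TQBF), a canonical \PSPACE-complete problem, in polynomial time. Given an instance $\Phi = Q_1 x_1 \cdots Q_n x_n.\,\phi(x_1,\dots,x_n)$, I would construct a quantized neural network $f$ with $k$-bit semantics and a specification $\psi \in \textsc{QF\_BV2}_{bw}$, both of size polynomial in $|\Phi|$, such that $\Phi$ is true if and only if the verification instance (with the precondition $\varphi$ trivially true) holds.

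The crucial observation is that both the quantized semantics and $\textsc{QF\_BV2}_{bw}$ admit bit-widths given in binary, so polynomial-length descriptions manipulate exponentially-wide bit-vectors. I would pick $k = 2^n$, so that a single $k$-bit register stores the entire truth table of $\phi$, one bit per assignment to $x_1,\dots,x_n$. First I would build a sub-network which, starting from the $n$ fixed $k$-bit constants that encode the truth-table columns of each $x_i$, assembles the truth-table vector $T$ of $\phi$; because each propositional connective in $\phi$ acts bit-wise on $k$-bit registers and $\phi$ has polynomial size, this costs only polynomial size.

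Next I would implement the quantifier collapse as $n$ successive blocks, each halving the effective width of the current truth-table vector: under $\forall$ a block produces the bit-wise AND of the lower and upper halves, under $\exists$ the bit-wise OR. The network supplies the shift by the current half-width via the round operation of eq.~\eqref{eq:round} (whose bit-shift parameter $k_i$ is given in binary), emits the original and the shifted vector as auxiliary outputs, and $\psi$ combines the two aligned vectors using the bit-wise operators of $\textsc{QF\_BV2}_{bw}$ according to the quantifier at that level. A final conjunct of $\psi$ asserts that the single surviving bit equals $1$, which holds exactly when $\Phi$ is true.

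The main obstacle is to realise a single collapse step within the restricted primitives on each side: the network can shift, add, and clip but cannot perform bit-wise AND/OR, whereas $\textsc{QF\_BV2}_{bw}$ can do bit-wise AND/OR but cannot shift or do arithmetic. The reduction therefore hinges on a gadget that alternates network-side shifted copies of the current vector with $\psi$-side bit-wise combinations, while also arguing that ReLU-N clipping and fixed-point rounding introduce no spurious bits (e.g.\ no stray carries, sign extensions, or clipping artefacts) that would corrupt later collapse levels. Proving this gadget lemma and composing $n$ copies of it---using the author's earlier trick of introducing dummy output neurons to expose intermediate vectors to $\psi$---is where the technical work of the proof lies.
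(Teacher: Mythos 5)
There is a genuine gap, and it sits exactly where you locate ``the technical work'': the alternating shift-and-combine gadget cannot be realized, because the data flow between the network and the predicate is strictly one-way. The verification instance evaluates $\semink{f}{k}$ first and only then applies $\psi$ to the resulting outputs; $\textsc{QF\_BV2}_{bw}$ has no shift or arithmetic, so once a collapse level has been computed by a bit-wise AND/OR inside $\psi$, there is no way to realign its lower and upper halves for the next level --- the network cannot see $\psi$'s intermediate results, and the network cannot precompute the needed shifted copies either, since they are shifts of values it never holds. Pushing the whole collapse into the network also fails: the quantized semantics offers only linear combinations, a right-shift via rounding, and a global clip, and adding two $0/1$-vectors to emulate bit-wise OR or AND produces carries that contaminate neighbouring bit positions of the exponentially wide register. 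So the central gadget lemma you defer is not merely technical; with these primitives it is false as stated.

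The paper's reduction sidesteps the collapse entirely by encoding the quantifier alternation into the problem's own quantifier structure rather than computing it. With $k$ the number of \emph{universal} variables, each bit position of a width-$2^k$ register indexes one valuation of the universal block; universal quantification becomes the requirement in $\psi$ that $\phi_{bw}(y_1,\dots,y_n)$ equal the all-ones vector in \emph{every} component, and existential quantification is delegated to the existential choice of network inputs in the dual satisfiability formulation, with each existential gadget copying its first $2^{u(i)}$ input bits across the register so that the encoded Skolem table can depend only on the universals quantified before it. The network then only ever needs to build constant vectors and replicate blocks (additions with disjoint support, so no carries), and $\psi$ needs only one parallel bit-wise pass of $\phi$ plus an equality test --- both comfortably within the available primitives. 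If you want to rescue your approach, you would have to find some such re-encoding; as written, the reduction does not go through.
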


\begin{proof}[Proof sketch]
Here we summarize the key ideas of our proof. For the complete proof, see the appendix.

To prove \PSPACE-hardness, we exhibit a reduction from \textsc{TQBF} which is known to be \PSPACE-complete~\cite{AroraB2009}. \textsc{TQBF} is the problem of deciding whether a quantified boolean formula (QBF) of the form $Q_1x_1.\,Q_2x_2.\,\dots\,Q_nx_n.\, \phi(x_1,x_2,\dots,x_n)$ is true, where each $Q_i\in\{\exists,\forall\}$ and $\phi$ is a quantifier-free formula in propositional logic over the variables $x_1,\dots,x_n$. A QBF formula is true if it admits a truth table for each existentially quantified variable $x_i$, where the truth table for $x_i$ specifies a value in $\{0,1\}$ for each valuation of those universally quantified variables $x_j$ on which $x_i$ depends (i.e.~$x_j$ with $j<i$). Thus, the size of each truth table is at most $2^k$, where $k$ is the total number of universally quantified variables in the formula.

In our reduction, given an instance of the \textsc{TQBF} problem $Q_1x_1.\,Q_2x_2.\,\dots\,Q_nx_n.\, \phi(x_1,x_2,\dots,x_n)$ we map it to the corresponding verification problem as follows. The interpretation $\semink{f}{k}$ of the neural network $f$ consists of $n+1$ disjoint gadgets $f_1,\dots,f_n,g$, each having a single input and a single output neuron of bit-width $2^k$. Note that bit-widths are given in binary representation, thus this is still polynomial in the size of the problem. We use these gadgets to encode all possible inputs to the QBF formula, whereas the postcondition in the verification problem encodes the quantifier-free formula itself. For a universally quantified variable $x_i$, the output of $f_i$ is always a constant vector encoding the values of $x_i$ in each of the $2^k$ valuations of universally quantified variables (for a fixed ordering of the valuations). For existentially quantified $x_i$, we use $f_i$ and its input neuron to encode $2^k$ possible choices for the value of $x_i$, one for each valuation of universally quantified variables, and thus to encode the truth table for $x_i$.
Finally, the gadget $g$ is used to return a constant bit-vector $\mathbf{1}$ of bit-width $2^k$ on any possible input. 
The predicate $\psi$ on the outputs is then defined as
\[
\psi:=(\phi_{bw}(y_1,\dots,y_n) = \mathbf{1}),
\]
where $\phi_{bw}$ is the quantifier-free formula in $\textsc{QF\_BV2}_{bw}$ identical to $\phi$, with only difference being that the inputs of $\phi_{bw}$ are bit-vectors of bit-width $2^k$ instead of boolean variables, and logical operations are also defined over bit-vectors (again, since bit-widths are encoded in binary representation, this is of polynomial size). The output of $\phi_{bw}$ is thus tested if it equals $1$ for each valuation of universally quantified variables and the corresponding values of existentially quantified variables defined by the truth tables. Our construction ensures that any satisfying input for the neural networks induces satisfying truth tables for the TQBF instance and vice-versa, which completes the reduction.
\end{proof}

Theorem~\ref{thm:complexity} is to our best knowledge the first theoretical result which indicates that the verification problem for quantized neural networks is harder than verifying their idealized real arithmetic counterparts. It sheds some light on the scalability gap of existing SMT-based methods for their verification, and shows that this gap is not solely due to practical inefficiency of existing methods for quantized neural networks, but also due to the fact that the problem is computationally harder. While Theorem~\ref{thm:complexity} gives a lower bound on the hardness of verifying quantized neural networks, it is easy to see that an upper bound on the complexity of this problem is $\NEXP$ since the inputs to the verification problem are of size that is exponential in the size of the problem. Closing the gap and identifying tight complexity bounds is an interesting direction of future work.

Note though that the specification logic $\textsc{QF\_BV2}_{bw}$ used to encode predicates over outputs is strictly more expressive than what we need to express boolean combinations of linear integer inequalities, which is the most common form of formal specifications seen in practice. This is because $\textsc{QF\_BV2}_{bw}$ also allows logical operations over bit vectors, and not just over single bits. Nevertheless, our result presents the first step towards understanding computational hardness of the quantized neural network verification problem.

\section{Improvements to bit-vector SMT-encodings}\label{sec:advanced_smt}

In this section, we study efficient SMT-encodings of quantized neural networks that would improve scalability of verification methods for them. In particular, we propose three simplifications to the monolithic SMT encoding of eq.~\eqref{eq:sum}, \eqref{eq:round}, and \eqref{eq:relun} introduced in \giacobbeno, which encodes quantized neural networks and formal specifications as formulas in the QF\_BV2 logic :
I) Remove dead branches of the If-Then-Else encoding of the activation function in eq.~\eqref{eq:relun}, i.e., branches that are guaranteed to never be taken;
II) Allocate only the minimal number of bits for each bit-vector variable in the formula; and III) Eliminate sub-expressions from the summation in eq.~\eqref{eq:sum}.
To obtain the information needed by the techniques I and II we further propose an abstract interpretation framework for quantized neural networks. 
\vspace{-0.1em}


\subsection{Abstract interpretation analysis}

Abstract interpretation \cite{cousot1977abstract} is a technique for constructing over-approximations to the behavior of a system. Initially developed for software verification, the method has recently been adapted to robustness verification of neural networks and is used to over-approximate the output range of variables in the network. Instead of considering all possible subsets of real numbers, it only considers an abstract domain which consists of subsets of suitable form (e.g.~intervals, boxes or polyhedra). This allows modeling each operation in the network in terms of operations over the elements of the abstract domain, thus over-approximating the semantics of the network. While it leads to some impreision, abstract interpretation allows more efficient output range analysis for variables. Due to its over-approximating nature, it remains sound for verifying neural networks.


Interval \cite{wang2018formal,tjeng2019evaluating}, zonotope \cite{mirman2018differentiable,singh2018fast}, and convex polytope \cite{katz2017reluplex,ehlers2017formal,bunel2018unified,wang2018efficient} abstractions have emerged in literature as efficient and yet precise choices for the abstract domains of real-valued neural networks. The obtained abstract domains have been used for output range analysis \cite{wang2018formal}, as well as removing decision points from the search process of complete verification algorithms \cite{tjeng2019evaluating,katz2017reluplex}.
One important difference between standard and quantized networks is the use of double-sided bounded activation functions in quantized neural networks, i.e., ReLU-N instead of ReLU \cite{jacob2018quantization}. 
This additional non-linear transition, on one hand, renders linear abstractions less effective, while on the other hand it provides hard upper bounds to each neuron, which bounds the over-approximation error.
Consequently, we adopt interval abstractions (IA) on the quantized interpretation of a network to obtain reachability sets for each neuron in the network. As discussed in \cite{tjeng2019evaluating}, using a tighter abstract interpretation poses a tradeoff between verification and pre-processing complexity. 
\vspace{-0.1em}

\subsection{Dead branch removal}
Suppose that through our abstract interpretation we obtained an interval $[lb,ub]$ for the input $x$ of a ReLU-N operation $y=\text{ReLU-N}(x)$. Then, we can substitute the formulation of the ReLU-N by
\begin{equation*}
 \begin{cases}
0, &\text{if } ub \leq 0\\
2^N-1, &\text{if } lb \geq 2^N-1\\
x, &\text{if } ub \geq 0 \text{ and }  lb \leq 2^N-1\\
\max\{0,x\}, &\text{if } 0 < ub \leq 2^N-1.\\
\min\{2^N-1,x\}, &\text{if } 0 \leq lb < 2^N-1.\\
 \max\{0,\min\{2^N-1,x\}\},& \text{otherwise,}
\end{cases}
\end{equation*}
which reduces the number of decision points in the SMT formula.


\subsection{Minimum bit allocation}
A $k$-bit quantized neural network represents each neuron and weight variable by a $k$-bit integer.
However, when computing the values of certain types of layers, such as the linear layer in eq. \eqref{eq:linear_layer}, a wider register is necessary.
The binary multiplication of a $k$-bit weight and a $k$-bit neuron value results in a number that is represented by $2k$-bits. Furthermore, summing up $n$ such $2k$-bit integer requires 
\begin{equation}\label{eq:naivebits}
    b_{\text{naive}} = 2k+\log_2(n)+1
\end{equation} bits to be safely represented without resulting in an overflow. 

Thus, linear combinations are in practice usually computed on 32-bit integer registers. Application of fixed-point rounding and the activation function then reduces the neuron values back to a $k$-bit representation~\cite{jacob2018quantization}.

QF\_BV2 reasons over fixed-size bit-vectors, i.e.~the bit width of each variable must be fixed in the formula regardless of the variable's value.
\giacobbe showed that the number of bits used for all weight and neuron variables in the formal affects the runtime of the SMT-solver significantly. 
For example, omitting the least significant bit of each variable cuts the runtime on average by half.
However, the SMT encoding of \giacobbe allocates $b_{\text{naive}}$ bits according to eq.~\eqref{eq:naivebits} for each accumulation variable of a linear layer.

Our approach uses the interval $[lb,ub]$ obtained for each variable by abstract interpretation to compute the minimal number of bits necessary to express any value in the interval.
As the signed bit-vector variables are represented in the two's complement format, we can compute the bit width $b$ of variable $x$ with computed interval $[lb,ub]$ by
\begin{equation}
    b_{\text{minimal}} = 1+\log_2(\max\{|lb|,|ub|\}+1).
\end{equation}
Trivially, one can show that $b_{\text{minimal}} < b_{\text{naive}}$, as $|ub| \leq 2^{2k}n$ and $|lb| \leq 2^{2k}n$.

\subsection{Redundant multiplication elimination}
Another difference between quantized and standard neural networks is the rounding of the weight values to the nearest representable value of the employed fixed-point format.
Consequently, there is a considerable chance that two connections outgoing from the same source neuron will have the same weight value.
For instance, assuming an 8-bit network and a uniform weight distribution, the chance of two connections having the same weight value is around $0.4\%$ compared to the much lower $4\cdot 10^{-8}\%$ of the same scenario happening in a floating-point network.

Moreover, many weight values express some subtle form of redundancy on a bit-level.
For instance, both multiplication by 2 and multiplication by 6 contain a shift operations by 1 digit in their binary representation.
Thus, computations
\begin{align}
    y_1 = 3\cdot x_1 & & y_2 = 6\cdot x_1
\end{align}
can be rewritten as
\begin{align}
    y_1 = 3\cdot x_1 & & y_2 = y_1<<1,
\end{align}
where $<<$ is a binary shift to the left by 1 digit.
As a result, a multiplication by 6 is replaced by a much simpler shift operation. Based on this motivation, we propose a redundancy elimination heuristic to remove redundant and partially redundant multiplications from the SMT formula.
Our heuristic first orders all outgoing weights of a neuron in ascending order and then sequentially applies a rule-matching for each weight value.
The rules try to find a simpler way to compute the multiplication of the weight and the neuron value by using already performed multiplications.
The algorithm and the rules in full are provided in the appendix.

Note that a similar idea was introduced by \factoringpaper in the form of a neuron factoring algorithm for the encoding of binarized (1-bit) neural networks into SAT formulas. However, the heuristic of \factoringpaper removes redundant additions, whereas we consider bit-level redundancies in multiplications. 
For many-bit quantization, the probability of two neurons sharing more than one incoming weight is negligible, thus making such neuron factoring proposed in \factoringpaper less effective.


\section{Experimental Evaluation}\label{sec:experiments}
We create an experimental setup to evaluate how much the proposed techniques affect the runtime and efficiency of the SMT-solver.
Our reference baseline is the approach of \giacobbeno, which consists of a monolithic and "balanced" bit-vector formulation for the Boolector SMT-solver.
We implement our techniques on top of this baseline. We limited our evaluation to Boolector, as other SMT-solvers supporting bit-vector theories, such as Z3 \cite{de2008z3}, CVC4 \cite{barrett2011cvc4}, and Yices \cite{dutertre2014yices}, performed much worse in the evaluation of \giacobbeno. 

Our evaluation comprises of two benchmarks. Our first evaluation considers the adversarial robustness verification of image classifier trained on the MNIST dataset \cite{lecun1998gradient}. 
In particular, we check the $l_{\infty}$ robustness of networks against adversarial attacks \cite{szegedy2013intriguing}. Other norms, such as $l_1$ and $l_2$, can be expressed in bit-vector SMT constraints as well, although with potentially negative effects on the solver runtime.
In the second evaluation, we repeat the experiment on the slightly more complex Fashion-MNIST dataset \cite{xiao2017online} .

All experiments are run on a 14-core Intel W-2175 CPU with 64GB of memory. We used the boolector \cite{NiemetzPreinerBiere-JSAT15} with the SAT-solvers Lingeling\cite{lingeling} (only baseline) and CaDiCal \cite{cadical} (baseline + our improvements) as SAT-backend.

Adversarial robustness specification can be expressed as
\begin{equation}\label{eq:robust}
|x-x_i|_\infty \leq \varepsilon \land y=\semink{f}{k}(x) \implies y = y_i,
\end{equation}
where $(x_i,y_i)$ is a human labeled test sample and $\varepsilon$ is a fixed attack radius.
As shown in eq.~\eqref{eq:robust}, the space of possible attacks increases with $\varepsilon$. 
Consequently, we evaluate with different attack radii $\varepsilon$ and study the runtimes individually.   
In particular, for MNIST we check the first 100 test samples with an attack radius of $\varepsilon=1$, the next 100 test samples with $\varepsilon=2$, and the next 200 test samples with $\varepsilon=3$ and $\varepsilon=4$ respectively.
For our Fashion-MNIST evaluation, we reduce the number of samples to 50 per attack radius value for $\varepsilon>2$ due to time and compute limitations.

The network studied in our benchmark consists of four fully-connected layers (784,64,32,10), resulting in 52,650 parameters in total. It was trained using a quantization-aware training scheme with a 6-bit quantization.

The results for the MNIST evaluation in terms of solved instances and median solver runtime are shown in Table \ref{tab:mnist_solved} and Table \ref{tab:mnist_runtime} respectively.
Table \ref{tab:fashion_solved} and Table \ref{tab:mnist_runtime} show the results for the Fashion-MNIST benchmark.

\begin{table}
    \centering
    \begin{tabular}{c|ccc}
Attack   & Baseline & Baseline & Ours  \\
radius &  (+ Lingeling) & (+ CaDiCal) &  \\\hline
$\varepsilon=1$ &  63 (63.6\%) & 92 (92.9\%) & \textbf{99 (100.0\%)} \\
$\varepsilon=2$ &  0 (0.0\%) & 20 (20.2\%) & \textbf{94 (94.9\%)} \\
$\varepsilon=3$ & 0 (0.0\%) & 2 (2.1\%) & \textbf{71 (74.0\%)}\\
$\varepsilon=4$ &  0 (0.0\%) & 1 (1.0\%) & \textbf{54 (55.7\%)}\\
    \end{tabular}
\caption{Number of solved instances of adversarial robustness verification on the MNIST dataset. Absolute numbers and in percentages of checked instances in parenthesis.}
\label{tab:mnist_solved}
\end{table}

\begin{table}
	\centering
	\begin{tabular}{c|ccccccc}
		Dataset   & Baseline & Baseline & Ours \\
		 &  (+ Lingeling) & (+ CaDiCal) &  \\\hline
		MNIST & 8803 \textbar 8789 & 2798 \textbar 3931 & \textbf{5} \textbar \textbf{90} \\\hline
		Fashion-MNIST & 6927 \textbar 6927 & 3105 \textbar 3474 & \textbf{4} \textbar \textbf{49} \\\hline
	\end{tabular}
	\caption{Median \textbar mean runtime of adversarial robustness verification process per sample. The reported values only account for non-timed-out samples.}
	\label{tab:mnist_runtime}
	\vspace{-1.05em}
\end{table}

\begin{table}
    \centering
    \begin{tabular}{c|ccc}
Attack   & Baseline & Baseline & Ours \\
radius &  (+ Lingeling) & (+ CaDiCal) &  \\\hline
$\varepsilon=1$ & 2 (2.3\%) & 44 (50.6\%) & \textbf{76 (87.4\%)} \\
$\varepsilon=2$ & 0 (0.0\%) & 7 (7.8\%) & \textbf{73 (81.1\%)} \\
$\varepsilon=3$ &  0 (0.0\%) & 1 (2.3\%) & \textbf{27 (62.8\%)} \\
$\varepsilon=4$ & 0 (0.0\%) & 0 (0.0\%) & \textbf{18 (40.9\%)}\\
    \end{tabular}
\caption{Number of solved instances of adversarial robustness verification on the Fashion-MNIST dataset. Absolute numbers and in percentages of checked instances in parenthesis. Best method in bold.}
\label{tab:fashion_solved}
\end{table}

\begin{table}
	\centering
	\begin{tabular}{l|rr}
		Method   & Total solved & Cumulative  \\
		   & instances & runtime \\\hline
		No redundancy eliminiation &  316 (80.8\%) & 7.7 h \\
		No minimum bitwidth & 315 (80.6\%) & 5.1 h \\
		No ReLU simplify & 88 (22.5\%) & 83.2 h \\
		No Abstract interpretation & 107 (27.4\%) & 126.0 h \\\hline
		All enabled & \textbf{318 (81.3\%)} & 7.9 h \\
	\end{tabular}
	\caption{Results of our ablation analysis on the MNIST dataset. The cumulative runtime only accounts for non-timed-out samples.}
	\label{tab:ablation}
	\vspace{-1.05em}
\end{table}

\subsection{Ablation analysis}
We perform an ablation analysis where we re-run our robustness evaluation with one of our proposed techniques disabled. The objective of our ablation analysis is to understand how the individual techniques affect the observed efficiency gains.
Due to time and computational limitations we focus our ablation experiments to MNIST exclusively.

The results in Table \ref{tab:ablation} show the highest number of solved instances were achieved when all our techniques were enabled. Nonetheless, Table \ref{tab:ablation} demonstrate these gains are not equally distributed across the three techniques. 
In particular, the ReLU simplification has a much higher contribution for explaining the gains compared to the redundancy elimination and minimum bitwidth methods.  
The limited benefits observed for these two techniques may be explain by the inner workings of the Boolector SMT-solver.

The Boolector SMT-solver \cite{NiemetzPreinerBiere-JSAT15} is based on a portfolio approach which sequentially applies several different heuristics to find a satisfying assignment of the input formula \cite{wintersteiger2009concurrent}. In particular, Boolector starts with fast but incomplete local search heuristics and falls back to slower but complete bit-blasting \cite{smt2018handbook} in case the incomplete search is unsuccessful \cite{niemetz2019boolector}. 
Although our redundancy elimination and minimum bitwidth techniques simplify the bit-blasted representation of the encoding, it introduces additional dependencies between different bit-vector variables. As a result, we believe these extra dependencies make the local search heuristics of Boolector less effective and thus enabling only limited performance improvements.

\section{Conclusion}
We show that the problem of verifying quantized neural networks with bit-vector specifications on the inputs and outputs of the network is PSPACE-hard.
We tackle this challenging problem by proposing three techniques to make the SMT-based verification of quantized networks more efficient.
Our experiments show that our method outperforms existing tools by several orders of magnitude on adversarial robustness verification instances.
Future work is necessary to explore quantized neural network verification's complexity with respect to different specification logics.
On the practical side, our methods point to limitations of monolithic SMT-encodings for quantized neural network verification and suggest that future improvements may be obtained by integrating the encoding and the solver steps more tightly.

\section{ Acknowledgments}
This research was supported in part by the Austrian Science Fund (FWF) under grant Z211-N23 (Wittgenstein Award), ERC CoG 863818 (FoRM-SMArt), and the European Union’s Horizon 2020 research and innovation programme under the Marie Skłodowska-Curie Grant Agreement No. 665385.

\bibliography{references}

\clearpage
\appendix

\begin{center}
	\Large\textbf{Appendix}
\end{center}

\section{Redundancy elimination algorithm}

The algorithm aiming to remove bit-level redundancies is shown in Algorithm \ref{algo:elim}. The rules for matching a weight value to the set of existing computations $V$ of a layer is Table \ref{tab:rules}.

\begin{algorithm}
    \SetAlgoLined
    \KwIn{Outgoing weights $W=\{w_i|i=1,\dots n\}$ of neuron $x$, with $n$ neurons in the next layers}
    \KwOut{Outgoing values $w_i\cdot x$ of neuron $x$}
    Sort $W$ in ascending order by absolute value\;
    $V \leftarrow \{\}$, $Y \leftarrow \{\}$\;
    \ForEach{$w_i \in W$}{
        Find rule for $w_i$ according to Table 1 given $V$\;
        \uIf{rule found}{
            $Y \leftarrow Y \cup \{rule(w_i,V)\}$\;
        }\Else{
            $y \leftarrow w_i\cdot x$\;
            $V \leftarrow V \cup \{w_i\}$, $Y \leftarrow Y \cup \{y\}$\;
        }
    }
    \textbf{return } $Y$\;
    \caption{Multiplication redundancy elimination}
    \label{algo:elim}
\end{algorithm}

\begin{table}
    \centering
    \begin{tabular}{l|l}
        Condition & Action \\\hline
        $w_i=0$ & $y_i=0$\\
        $w_i=1$ & $y_i=x$\\
        $\exists w_j: w_i=w_j$ & $y_i=y_j$\\
        $\exists w_j: w_i=-w_j$ & $y_i=-y_j$\\
        $\exists w_j: w_i=w_j\cdot 2^k$ & $y_i=y_j<<k$\\
    \end{tabular}
    \caption{Rules used for the multiplication redundancy elimination heuristic}
    \label{tab:rules}
    \vspace{-1.2em}
\end{table}

\section{Proof of Theorem~1}

In order to prove that the verification problem for quantized neural networks is \PSPACE-hard, we exhibit a reduction from \textsc{TQBF} which is known to be \PSPACE-complete~\cite{AroraB2009} to the QNN verification problem. \textsc{TQBF} is the problem of deciding whether a quantified boolean formula (QBF) in propositional logic of the form $Q_1x_1.\,Q_2x_2.\,\dots\,Q_nx_n.\, \phi(x_1,x_2,\dots,x_n)$ is true, where each $Q_i\in\{\exists,\forall\}$ and $\phi$ is a quantifier-free formula in propositional logic over the variables $x_1,\dots,x_n$.

\medskip\noindent {\em TQBF. }Given $\Phi = Q_1x_1.\,Q_2x_2.\,\dots\,Q_nx_n.\, \phi(x_1,x_2,\dots,x_n)$ a QBF formula, for each variable $x_i$ let $u(i)$ be the number of universally quantified variables $x_j$ with $j<i$. Then, $\Phi$ is true if it admits a truth table for each existentially quantified variable $x_i$, where the truth table for $x_i$ specifies a value in $\{0,1\}$ for each valuation of $u(i)$ universally quantified variables that $x_i$ depends on. Hence, the size of the truth table for $x_i$ is $2^{u(i)}$. In particular, if $k$ is the total number of universally quantified variables, then to show that $\Phi$ is true it suffices to find $n-k$ truth tables with each of size at most $2^k$.

\medskip\noindent {\em Ordering of variable valuations.} Let $U$ denote the ordered set of all universally quantified variables in the QBF formula, where variables are ordered according to their indices. A {\em valuation} of $U$ is an assignment in $\{0,1\}^k$ of each variable in $U$. As a truth table for each existentially quantified variable $x_i$ is defined with respect to all variable valuations of universally quantified variables on which $x_i$ depends, it will be convenient to fix an ordering $\sqsubseteq$ of all $2^k$ valuations of $U$. For two valuations $(y_1,\dots,y_k)$ and $(y_1',\dots,y_k')$ in $\{0,1\}^k$, we say that  $(y_1,\dots,y_k) \sqsubseteq (y_1',\dots,y_k')$ if either they are equal or there exists an index $1\leq i\leq k$ such that $y_i<y_i'$ and $y_j=y_j'$ for $j>i$. Equivalently, $(y_1,\dots,y_k) \sqsubseteq (y_1',\dots,y_k')$ if and only if
\[
\sum_{i=1}^k y_i\cdot 2^i \leq \sum_{i=1}^k y'_i\cdot 2^i
\]
Thus this is the lexicographic ordering on "reflected" valuations, i.e.~the largest index having the highest priority. For brevity, we will still refer to it as the {\em lexicographic ordering}. For an existentially quantified variable $x_i$, its truth table can therefore also be defined by specifying a value in $\{0,1\}$ for each of the first $2^{u(i)}$ valuations of $U$ in the lexicographic ordering, since these are the orderings in which we consider all possible valuations of the first $u(i)$ universally quantified variables in $U$ with setting the remaining universally quantified variables to equal $0$. 

\medskip\noindent {\em Reduction.} We now proceed to the construction of our reduction. Given $\Phi = Q_1x_1.\,Q_2x_2.\,\dots\,Q_nx_n.\, \phi(x_1,x_2,\dots,x_n)$ a QBF formula, we need to construct
\begin{enumerate}
    \item a quantized neural network $f^{\Phi}$, and
    \item a predicate $\psi^{\Phi}$ over the outputs of the neural network,
\end{enumerate}
each of polynomial size in the size of $\Phi$, such that $\Phi$ is true if and only if the neural networks admits inputs that satisfy the verification problem.
\begin{enumerate}
    \item {\em Construction of the quantized neural network.} We construct $f^{\Phi}$ to consist of $n+1$ gadgets $f^{\Phi}_1,\dots,f^{\Phi}_n,g^{\Phi}$. Each gadget can be viewed as a sub-neural network, and all nodes in gadgets are of the bit-width $2^k$ each. Since bit-widths in quantized neural networks are given in binary representation, specifying bit-width is still of polynomial size. Each gadget $f^{\Phi}_i$ is associated to the variable $x_i$ in the QBF formula. The purpose of each gadget is to produce the output of the following form:
    \begin{itemize}
        \item If $x_i$ is universally quantified in $\Phi$, then $f^{\Phi}_i$ will return the single output neuron whose value is the constant bit-vector $c_i\in \{0,1\}^{2^k}$ whenever the predicate $\psi^{\Phi}$ is satisfied. For each $1\leq j\leq 2^k$, the component $c_i[j]$ will be equal to $1$ if and only if the value of $x_i$ in the $j$-th valuation of $U$ (w.r.t.~the lexicographic ordering) is equal to $1$. Thus, $c_i$ will encode values of $x_i$ in each valuation of $U$ when ordered lexicographically.
        \item If $x_i$ is existentially quantified in $\Phi$, then $f^{\Phi}_i$ will return a single output neuron which will encode a truth table for $x_i$. Recall, $x_i$ depends only on the first $u(i)$ universally quantified variables in $\Phi$, thus its truth table is of size $2^{u(i)}$. As the values of $x_i$ should remain invariant if the values remaining universally quantified variables are changed, we will encode the truth table for $x_i$ by first extracting the first $2^{u(i)}$ bits from the input neuron to encode the truth table itself, and then copying this block of bits $2^{k-u(i)}$ times in order to obtain an output bit-vector of bit-width $2^k$.
        \item The gadget $g^{\Phi}$ will return the constant bit-vector $\mathbf{1}$ consisting of all $1$'s (thus written in bold) whenever the predicate $\psi^{\Phi}$ is satisfied. Note, the constant bit-vector whose each bit is 1 is exponential in $k$ and thus exponential in the size of the TQBF problem.
        Hence, as we will later need to encode $\mathbf{1}$ into the predicate $\psi^{\Phi}$ over outputs of the quantized neural network, we cannot do it directly but use the quantized neural network to construct $\mathbf{1}$.
    \end{itemize}
    We now describe the architecture of the gadgets that can perform the tasks described above:
    \begin{itemize}
        \item The gadget $g^{\Phi}$ consists only of the input and the output layer. The input layer consists of the single neuron $x_g$, and three output neurons $y_g$, $y_g'$ and $y_g''$. Each edge between the layers has weight $1$, bias $0$ and the cut-off value $2^k$. The bit-shifts of the edges from $x_g$ to $y_g$, $y_g'$ and $y_g''$ are $0$, $2^k-1$ and $1$, respectively. The gadget $g^{\Phi}$ is accompanied by the predicate $\psi^{\Phi}_g$ which is satisfied if and only if $x_g=y_g=\mathbf{1}$. Formally, we define $\psi^{\Phi}_g$ via
        \[ \psi^{\Phi}_g := ((y_g' = 1) \land (\neg (y_g'') \lor y_g)).  \]
        To prove this, suppose first that $\psi^{\Phi}_g$ is true so we need to show that $x_g=y_g=\mathbf{1}$. Clearly, by our definition of the gadget we have $x_g=y_g$. Furthermore, $y_g' = 1$ implies that the first bit of $x_g=y_g$ is equal to $1$. Finally, observe that
        \begin{equation*}
        \begin{split}
            &(\neg (y_g'') \lor y_g) \equiv (y_g'' \rightarrow y_g) \\
            &\equiv (0\rightarrow y_g[1]) \land \bigwedge_{j=1}^{2^k-1} (y_g[j] \rightarrow y_g[j+1]),
        \end{split}
        \end{equation*}
        where the last inequality follows from the fact that $y_g''$ is obtained by shifting $y_g$ by $1$ bit. Here we use $y_g[j]$ to denote the $j$-th bit in $y_g$ for each $1\leq j\leq 2^k$, with $y_g[1]$ denoting the most significant bit in $y_g$. Then, since we already showed that the first bit of $x_g=y_g$ is equal to $1$, a simple induction on $j$ shows that all bits of $x_g=y_g$ are equal to $1$, i.e.~$x_g=y_g=\mathbf{1}$.
        
        Conversely, suppose that $x_g=y_g=\mathbf{1}$ so we need to show that $\psi^{\Phi}_g$ is true. The formula $\neg (y_g'') \lor y_g$ is trivially true as $y_g=\mathbf{1}$ and $y_g' = 1$ follows since $y_g'$ is obtained by shifting $y_g=\mathbf{1}$ by $2^k-1$ bits. This concludes the proof that $\psi^{\Phi}_g$ is true if and only if $x_g=y_g=\mathbf{1}$.
    
        
        

        \item For $f^{\Phi}_i$ corresponding to universally quantified variable $x_i$, let $B_i$ be the block of bits starting with $2^{u(i)}$ zeros followed by $2^{u(i)}$ ones. The bit-vector $c_i$ should then consist of $2^{k-u(i)-1}$ repetitions of the block $B_i$. The gadget $f^{\Phi}_i$ will thus consist of two sequentially composed parts. The first part $f^{\Phi}_{i,1}$ takes any bit-vector of bit-width $2^k$ as an input, and outputs a bit-vector of the same bit-width which starts with the block $B_i$ followed by zeros. The second part $f^{\Phi}_{i,2}$ takes the output of the first part as an input, and outputs $c_i$.
        
        $f^{\Phi}_{i,1}$ consists of $3$ layers: the input layer $L_0$ with the single input neuron, layer $L_1$ with two neurons, and output layer $L_2$ with a single output neuron. The input neuron in $L_0$ is set to coincide with the output neuron of $g^{\Phi}$ and thus equals $\mathbf{1}$. The cut-off value of each neuron in $f^{\Phi}_{i,1}$ is $N=2^k$. The weights of edges from the input neuron in $L_0$ to neurons in $L_1$ are set to $w_{01}'=w_{01}''=1$, the biases $b_1'=b_1''=0$ and the bit-shifts $F_{01}'=2^{u(i)}$ and $F_{01}''=2^{2u(i)}$. Hence, the output values of two neurons in $L_1$ will be the bit-vectors of bit-width $2^k$ that start with $2^{u(i)}$ (resp.~$2^{2u(i)}$) zeros, followed by ones. Finally, the weights of edges from neurons in $L_1$ to the output neuron in $L_2$ are set to $w_{12}'=1$ and $w_{12}''=-1$, the bias $b_2=0$ and the bit-shifts $F_{12}'=F_{12}''=0$. The output value of the neuron in $L_2$ will thus be a bit-vector of bit-width $2^k$ which starts with the block of bits $B_i$ and followed by zeros, as desired.
        
        $f^{\Phi}_{i,2}$ consists of $2(k-u(i)-1)+1$ layers, where the input layer coincides with the output layer of $f^{\Phi}_{i,1}$. Then for each $1\leq j\leq k-u(i)-1$, the $2j$-th layer consists of two neurons and the $(2j+1)$-st layer consists of a single neuron. The cut-off value of each neuron is $N=2^k$. The weights of each edge in $f^{\Phi}_{i,2}$ is $1$ and the bias of each neuron is $0$, thus we only need to specify the bit-shifts. For two edges from the neuron in the $(2j-1)$-st layer to neurons in the $(2j)$-th layer we set bit-shifts to be $F_{2j-1,2j}'=0$ and $F_{2j-1,2j}''=u(i)+j$, respectively. For two edges from neurons in the $2j$-th layer to the neuron in the $(2j+1)$-st layer both bit-shits are set to $0$. Given that the input $f^{\Phi}_{i,2}$ is a bit-vector of bit-width $2^k$ which starts with the block $B_i$ of length $2^{2u(i)}$ followed by zeros, by simple induction one can show that the output of the neuron in the $(2j+1)$-st layer is a bit-vector of bit-width $2^k$ which starts with $2^{j}$ copies of $B_i$ which are followed by zeros. Hence, the value of the output neuron of $f^{\Phi}_{i,2}$ will be $c_i$, as desired.
        
        \item For $f^{\Phi}_i$ corresponding to existentially quantified variable $x_i$, the neural network $f^{\Phi}_i$ will also consist of two sequentially composed parts. The first part $f^{\Phi}_{i,1}$ takes any bit-vector of bit-width $2^k$ as an input, and outputs a bit-vector of the same bit-width which starts with the same $2^{u(i)}$ bits as the input bit-vector but which are then followed by zeros. The second part $f^{\Phi}_{i,2}$ takes the output of the first part as an input, and outputs a bit-vector obtained by copying $2^{k-u(i)}$ times the block of the first $2^{u(i)}$ bits.
        
        $f^{\Phi}_{i,1}$ consists only of the input and the output layer. The input layer consists of two neurons, one of which coincides with the input neuron $x_g$ of the gadget $g^{\Phi}$. We denote the other input neuron by $z_i$. The output layer consists of two neurons $h_i$ and $h_g$. The weights of edges from $z_i$ to $h_i$ and from $x_g$ to $h_g$ are set to $1$, and weights of edges from $z_i$ to $h_g$ and from $x_g$ to $h_i$ are set to $0$. Biases and cut-off values of all edges between the layers are set to $0$ and $2^k$, respectively. All bit-shifts are also set to $0$, with the exception of the edge from $x_g$ to $h_g$ whose bit-shift is set to $2^{u(i)}$. These choices ensure that $h_i = z_i$ and that $h_g$ is a bit-vector that starts with $2^{u(i)}$ $0$-bits followed by $2^k-2^{u(i)}$ $1$-bits. The gadget $f^{\Phi}_{i,1}$ is accompanied by the predicate $\psi^{\Phi}_i$ defined via
        \[ \psi^{\Phi}_i := (h_i = (h_i \land \neg(h_g)). \]
        This choice of $\psi^{\Phi}_i$ together with the design of $f^{\Phi}_{i,1}$ enforce that $\psi^{\Phi}_i\land \psi^{\Phi}_g$ is satisfied if and only if $x_i=z_i$ and the last $2^k-2^{u(i)}$ bits of $x_i=z_i$ are equal to $0$.
        
        
        
        Since the goal of the second part is to just copy $2^{k-u(i)}$ times the block of the first $2^{u(i)}$ bits of the output $h_i$ of $f^{\Phi}_{i,1}$, the second part $f^{\Phi}_{i,2}$ is constructed analogously as in the case of neural networks corresponding to universally quantified variables above.
    \end{itemize}
    Recall, constant bit-vectors, bit-widths of bit-vectors as well as the number of bits used for rounding (i.e.~bit-shifts) are encoded in binary representation. Thus, each of the values used in the construction of gadgets $f^{\Phi}_i$ and $g^{\Phi}$ is encoded using at most $k$ bits, and is polynomial in the size of $\Phi$. On the other hand, from our construction one can check that each gadget consists of at most $2k+4$ neurons. Therefore, as there are $n+1$ gadgets the size of all networks combined is $O(k\cdot (2k+4)\cdot (n+1)) = O(n^3)$.
    
    \item {\em Construction of the output predicate $\psi^{\Phi}$.} Denote by $y_1,\dots,y_n,y_g$ the outputs of $f^{\Phi}_1,\dots,f^{\Phi}_n,g^{\Phi}$, respectively. We define $\psi^{\Phi}$ as
    \begin{equation}\label{eq:predicatepost}
        \psi^{\Phi} := (\phi_{bw}(y_1,\dots,y_n) =y_g) \land \psi^{\Phi}_{\text{auxiliary}},
    \end{equation}
    where $\phi_{bw}$ is the quantifier-free formula in $\textsc{QF\_BV2}_{bw}$ identical to $\phi$, with only difference being that the inputs of $\phi_{bw}$ are bit-vectors of bit-width $2^k$ instead of boolean variables and logical operations are also defined over bit-vectors. The formula $\psi^{\Phi}_{\text{auxiliary}}$ collects the auxiliary logical predicates that were introduced by our construction of each gadget above, i.e.
    \[ \psi^{\Phi}_{\text{auxiliary}} := \bigwedge_{i=1}^n\psi^{\Phi}_i\land \psi^{\Phi}_g. \]
    As $y_g=\mathbf{1}$ if and only if $\Psi^{\Phi}_g$ is satisfied, the formula $\psi^{\Phi}$ is true if and only if the equality $\phi_{bw}(y_1,\dots,y_n) =y_g$ holds for each component. Intuitively, $\psi^{\Phi}$ performs bit-wise evaluation of the formula $\phi$ on each component of bit-vector inputs, and then checks if each output is equal to $1$. The size of $\psi^{\Phi}$ is thus $O(|\phi_{bw}|) = O(|\phi|\cdot k + 3\cdot n + 3)=O(|\phi|\cdot n)$, where the additional factor $k$ comes from the fact that inputs of $\phi_{bw}$ are bit-vectors of bit-width $2^k$, and bit-widths are encoded in binary representation.
    
    Note that the above expression for $\psi^{\Phi}$ differs from that in the sketch proof of Theorem~\ref{thm:complexity}, where we omitted the formula $\psi^{\Phi}_{\text{auxiliary}}$ to simplify the presentation.
\end{enumerate}
Hence, the size of the instance of the quantized neural network verification problem to which we reduced $\Phi$ is $O(n^3+n\cdot |\phi|)$, which is polynomial in the size of $\Phi$.

\medskip\noindent {\em Correctness of reduction.} It remains to prove correctness of our reduction, i.e. that $\Phi$ is true if and only if the corresponding quantized neural network verification problem is satisfiable.

Suppose first that $\Phi$ is true, i.e.~that for each existentially quantified variable $x_i$ in $\Phi$ there exists a truth table $\mathbf{t}_i$ of size $2^{u(i)}$, such that any valuation of universally quantified variables $U$ together with the corresponding values of existentially quantified variables defined by truth tables form a satisfying assignment for the quantifier-free formula $\phi$ in $\Phi$. Consider the following set of inputs $z_1,\dots,z_n,x_g$ to gadgets $f^{\Phi}_1,\dots,f^{\Phi}_n,g^{\Phi}$:
\begin{itemize}
    \item If $x_i$ is universally quantified, then $z_i=\mathbf{0}$.
    \item If $x_i$ is existentially quantified, consider $\mathbf{t}_i$ as a bit-vector of bit-width $2^{u(i)}$ with elements ordered in such a way that corresponding valuations of universally quantified variables on which $x_i$ depends in $\Phi$ are ordered lexicographically. Then $z_i$ starts with a block of bits identical to $\mathbf{t}_i$, followed by zeros.
    \item $z_g=0$.
\end{itemize}
From our construction of neural networks and the predicate $\psi^{\Phi}$ we know that:
\begin{itemize}
    \item $g^{\Phi}(z_g)=\mathbf{1}$.
    \item If $x_i$ is universally quantified, then $f^{\Phi}_i(z_i)$ is equal to the bit-vector $c_i$ whose $j$-th component is equal to $1$ if and only if the value of $x_i$ in the $j$-th valuation of $U$ in the lexicographic ordering is equal to $1$, where $1\leq j\leq 2^k$.
    \item If $x_i$ is existentially quantified, then $f^{\Phi}_i(z_i)$ is the bit-vector obtained by copying the block $\mathbf{t}_i$ $2^{k-u(i)}$ times. Thus, the $j$-th component of $f^{\Phi}_i(z_i)$ is equal to the value in the truth table $\mathbf{t}_i$ corresponding to the $j$-th valuation of $U$ in the lexicographic ordering, where $1\leq j\leq 2^k$.
\end{itemize}
Finally, as $\psi^{\Phi}$ is obtained by considering a bit-vector version of formula $\phi$ and then checking if each component of the output is equal to $1$, it follows that the output of $\psi^{\Phi}$ on inputs $f^{\Phi}_1(z_1),\dots,f^{\Phi}_n(z_n)$ is equal to $1$, thus showing that the quantized neural network verification problem is satisfiable.

Conversely, suppose that $z_1,\dots,z_n,z_g$ is a set of satisfying inputs to the quantized neural network verification problems. Then for each existentially quantified variable $x_i$, we construct a truth table $\mathbf{t}_{i}$ as follows. Again, consider $\mathbf{t}_{i}$ as a bit-vector of bit-width $2^{u(i)}$, where elements are ordered in such a way that the corresponding valuations of universally quantified variables on which $x_i$ depends are ordered lexicographically. Then we set $\mathbf{t}_i$ to be equal to the block of first $2^{u(i)}$ bits in $z_i$. From our construction of the quantized neural network and $\psi^{\Phi}$, and the fact that $z_1,\dots,z_n,z_g$ is a satisfying inputs to the quantized neural network verification problem, it follows that for any valuation of $U$ the corresponding values of existentially quantified variables defined by these turth tables yield a satisfying assignment for $\phi$. Hence, the QBF formula $\Phi$ is true, as desired.

\end{document}